\newtheorem{theorem}{Theorem}
\newtheorem{corollary}[theorem]{Corollary}
\newcommand{\E}{\mathbb{E}}
\newcommand{\eps}{\varepsilon}
\newcommand{\p}{\mathbb{P}}
\DeclareMathOperator{\sign}{sign}
\newcommand{\h}{\mathcal{H}}
\newcommand{\Dh}{\Delta(\mathcal{H})}
\newcommand{\dd}{\mathcal{D}}
\newcommand{\dr}{\mathcal{D}_{R}}
\DeclareMathOperator{\ld}{er_{\mathcal{D}}}
\DeclareMathOperator{\Ada}{Ada}
\DeclareMathOperator{\LR}{LR}
\DeclareMathOperator{\er}{er_{\mathcal{D}}}
\DeclareMathOperator{\err}{er_{\mathcal{D}_{\textit{R}}}}
\DeclareMathOperator{\erri}{er_{\mathcal{D}_{\textit{R}_{\textit{i}}}}}
\newcommand{\cA}{\mathcal{A}}
\newcommand{\cX}{\mathcal{X}}
\newcommand{\cW}{\mathcal{W}}
\def\com{1}
\newcommand{\markus}[1]{
    \if\com1
        \todo[inline,color=blue!30]{\small\textbf{Markus:} #1}
    \else
    \fi
}
\newcommand{\kasper}[1]
{
    \if\com1
        \todo[inline,color=green!30]{\small\textbf{Kasper:} #1}
    \else
    \fi
}
\newcommand{\mikael}[1]
{
    \if\com1
        \todo[inline,color=red!30]{\small\textbf{Mikael:} #1}
    \else
    \fi
}
\title{The Many Faces of Optimal Weak-to-Strong Learning}
\author{Mikael M\o ller H\o gsgaard \footnote{Aarhus University Computer Science Department, mail: hogsgaards@cs.au.dk.} \and Kasper Green Larsen \footnote{Aarhus University Computer Science Department, mail: larsen@cs.au.dk.}\and Markus Engelund Mathiasen \footnote{Aarhus University Computer Science Department, mail: markusm@cs.au.dk.} }
\date{}
\begin{document}

\maketitle
\let\temp\thefootnote 
\renewcommand{\thefootnote}{} 

\let\thefootnote\temp 
\begin{abstract}
Boosting is an extremely successful idea, allowing one to combine multiple low accuracy classifiers into a much more accurate voting classifier. In this work, we present a new and surprisingly simple Boosting algorithm that obtains a provably optimal sample complexity. Sample optimal Boosting algorithms have only recently been developed, and our new algorithm has the fastest runtime among all such algorithms and is the simplest to describe: Partition your training data into 5 disjoint pieces of equal size, run AdaBoost on each, and combine the resulting classifiers via a majority vote. In addition to this theoretical contribution, we also perform the first empirical comparison of the proposed sample optimal Boosting algorithms. Our pilot empirical study suggests that our new algorithm might outperform previous algorithms on large data sets.
\end{abstract}

\section{Introduction}
Boosting is one the most powerful machine learning ideas, allowing one to improve the accuracy of a simple base learning algorithm $\cA$. The main idea in Boosting, is to iteratively invoke the base learning algorithm $\cA$ on modified versions of a training data set. Each invocation of $\cA$ returns a classifier, and these classifiers are finally combined via a majority vote or averaging. Variations of Boosting, including Gradient Boosting~\cite{gradboost,lightGBM,xgboost}, are often among the best performing classifiers in practice, especially when data is tabular. Furthermore, when combined with decision trees or regressors as the base learning algorithm, these algorithms are independent of scaling of data features and provides impressive out-of-the-box performance. See the excellent survey~\cite{survey} for further details. 

The textbook Boosting algorithm for binary classification, AdaBoost~\cite{adaboost}, works by maintaining a weighing $D_t = (D_t(1),\dots,D_t(m))$ of a training set $S=(x_1,y_1),\dots,(x_m,y_m)$ with $(x_i,y_i) \in \cX \times \{-1,1\}$ for an input domain $\cX$ and labels $\{-1,1\}$. In each Boosting iteration $t$, a classifier $h_t : \cX \to \{-1,1\}$ is trained to minimize the $0/1$-loss on $S$, but with samples weighed according to $D_t$. The weights are then updated such that samples $(x_i,y_i)$ misclassified by $h_t$ have a larger weight under $D_{t+1}$ and correctly classified samples have a smaller weight. Finally, after a sufficient number of iterations $T$, AdaBoost combines the classifiers $h_1,\dots,h_T$ into a \emph{voting classifier} $f(x) = \sign(\sum_t \alpha_t h_t(x))$ taking a weighted majority vote among the predictions made by the $h_t$'s. Here the $\alpha_t$'s are real-valued weights depending on the accuracy of $h_t$ on $D_t$. 

\paragraph{Weak-to-Strong Learning.}
Boosting was originally introduced to address a theoretical question asked by Kearns and Valiant~\cite{kearns1988learning,kearns1994cryptographic} on \emph{weak-to-strong} learning. A learning algorithm $\cA$ is called a weak learner if, for \emph{any} distribution $\dd$ over $\cX \times \{-1,1\}$, when given some $m_0$ i.i.d.\ training samples $S$ from $\dd$, it produces with probability at least $1-\delta_0$, a classifier/hypothesis $h_S \in \h$ with $\er(h_S) \leq 1/2-\gamma$, where $\er(h) = \Pr_{(x,y)\sim \dd}[h(x)\neq y]$ and $\h \subseteq \cX \to \{-1,1\}$ is a predefined hypothesis set used by $\cA$. A weak learner thus produces, for any distribution $\dd$, a hypothesis that performs slightly better than random guessing when given enough samples from $\dd$. The parameter $\gamma$ is called the \emph{advantage} of the weak learner and we refer to the weak learner as a $\gamma$-weak learner. We think of $\delta_0$ and $m_0$ as constants that may depend on $\h$, but not $\dd$. A strong learner in contrast, is an algorithm that for any distribution $\dd$, and any parameters $(\eps,\delta)$ with $0< \eps, \delta < 1$, when given $m(\eps,\delta)$ i.i.d.\ samples $S$ from $\dd$, produces with probability at least $1-\delta$ a hypothesis $h_S : \cX \to \{-1,1\}$ with $\er(h_S) \leq \eps$. Here $m(\eps,\delta)$ is the \emph{sample complexity} of the strong learner. A strong learner thus obtains arbitrarily high accuracy when given enough training samples. With these definitions in place, Kearns and Valiant asked whether it is always possible to produce a strong learner from a weak learner. This was indeed shown to be the case~\cite{schapire1990strength}, and AdaBoost is one among many examples of algorithms producing a strong learner from a weak learner.

\paragraph{Sample Complexity.}
Given that weak-to-strong learning is always possible, a natural question is "what is the best possible sample complexity $m(\eps,\delta)$ of weak-to-strong learning?". This is known to depend on the VC-dimension $d$ of the hypothesis set $\h$ used by the weak learner, as well as the advantage $\gamma$ of the weak learner. In particular, the best known analysis~\cite{understandingMachineLearning} of AdaBoost shows that it achieves a sample complexity $m_{\Ada}(\eps,\delta)$ of
\begin{eqnarray}
\label{eq:ada}
m_{\Ada}(\eps,\delta) = O\left(\frac{d \ln(1/(\eps \gamma)) \ln(d/(\eps \gamma))}{\gamma^2 \eps} + \frac{\ln(1/\delta)}{\eps} \right).
\end{eqnarray}
Larsen and Ritzert~\cite{optimalweaktostronglearning} were the first to give an algorithm improving over AdaBoost. Their algorithm has a sample complexity $m_{\LR}(\eps,\delta)$ of
\[
m_{\LR}(\eps,\delta) = O\left(\frac{d}{\gamma^2 \eps} + \frac{\ln(1/\delta)}{\eps} \right).
\]
They further complemented their algorithm with a lower bound proof showing that any weak-to-strong learning algorithm must have a sample complexity $m(\eps,\delta)$ of
\[
m(\eps,\delta) = \Omega\left(\frac{d}{\gamma^2 \eps} + \frac{\ln(1/\delta)}{\eps} \right).
\]
The optimal sample complexity for weak-to-strong learning is thus fully understood from a theoretical point of view. 

\paragraph{Other Performance Metrics.}
Sample complexity is however not the only interesting performance metric of a weak-to-strong learner. Furthermore, $O(\cdot)$-notation may hide constants that are too large for practical purposes. It is thus worthwhile to develop alternative optimal weak-to-strong learners and compare their empirical performance. 

The algorithm of Larsen and Ritzert for instance has a rather slow running time as it invokes the weak-learner a total of $O(m^{\lg_4 3} \gamma^{-2} \ln m) = O(m^{0.8} \gamma^{-2})$ times on a training set of $m$ samples. This should be compared to AdaBoost that only invokes the weak learner $O(\gamma^{-2} \ln m)$ times to achieve the sample complexity stated in~\eqref{eq:ada}.

An alternative sample optimal weak-to-strong learner was given by Larsen~\cite{baggingIsOptimal} as a corollary of a proof that Bagging~\cite{bagging} is an optimal PAC learner in the realizable setting. Concretely, his work gives a weak-to-strong learner with an optimal sample complexity while only invoking the weak-learner a total of $O(\gamma^{-2} \ln(m/\delta) \ln m)$ times on a training set of $m$ samples.

A natural question is whether the sample complexity of AdaBoost shown in~\eqref{eq:ada} can be improved to match the optimal sample complexity by a better analysis. Since AdaBoost only invokes its weak learner $O(\gamma^{-2} \ln m)$ times on $m$ samples, this would be an even more efficient optimal weak-to-strong learner. Unfortunately, work by H\o gsgaard et al.~\cite{adaboostNotOptimal} shows that AdaBoost's sample complexity is sub-optimal by at least a $\ln(1/\eps)$ factor, i.e.\
\begin{eqnarray}
\label{eq:adalb}
m_{\Ada}(\eps,\delta) = \Omega\left( \frac{d \ln(1/\eps)}{\gamma^2 \eps} + \frac{\ln(1/\delta)}{\eps}\right). 
\end{eqnarray}
It thus remains an intriguing task to design weak-to-strong learners that have an optimal sample complexity and yet match the runtime guarantees of AdaBoost. Furthermore, do the theoretical improvements translate to practice? Or are there large hidden constant factors in the $O(\cdot)$-notation? And how does it vary among the different weak-to-strong learners?

\subsection{Our Contributions}
In this work, we first present a new weak-to-strong learner with an optimal sample complexity (at least in expectation). The algorithm, called \textsc{Majority-of-5} and shown as Algorithm~\ref{alg:maj29}, is extremely simple: Partition the training set into 5 disjoint pieces of size $m/5$ and run AdaBoost on each to produce voting classifiers $f_1,\dots,f_{5}$. Finally combine them by taking a majority vote $g(x) = \sign(\sum_{t=1}^{5} f_t(x))$. This simple algorithm only invokes the weak learner $O(\gamma^{-2} \ln m)$ times, asymptotically matching AdaBoost and improving over previous optimal weak-to-strong learners. Furthermore, since each invocation of AdaBoost is on a training set of only $m/5$ samples, it is at least as fast as AdaBoost, even when considering constant factors. It is even trivial to parallelize the algorithm among up to $5$ machines/threads.

\begin{algorithm}
  \DontPrintSemicolon
  \KwIn{Training set $S=(x_1,y_1),\dots,(x_m,y_m)$. Weak learner $\cW$.
  }
  \KwResult{Hypothesis $g : \cX \to \{-1,1\}$.}
  
    Partition $S$ into $5$ disjoint pieces $S_1,\dots,S_{5}$ of size $m/5$.
    
    \For{$t=1,\dots,5$}{
        Run AdaBoost on $S_t$ with $\cW$ to obtain $f_t : \cX \to \{-1,1\}$.
    }

    $g \gets \sign(\sum_t f_t)$.

    \Return{$g$}

  \caption{\textsc{Majority-of-5}($S, \cW$)}\label{alg:maj29}
\end{algorithm}

The concrete guarantees we give for \textsc{Majority-of-5} are as follows
\begin{theorem}
\label{thm:main}
    For any distribution $\dd$ over $\cX \times \{-1,1\}$ and any $\gamma$-weak learner $\cW$ using a hypothesis set $\h$ of VC-dimension $d$, it holds for a training set $S \sim \dd^m$ that running \textsc{Majority-of-5} on $S$ to obtain a hypothesis $g$ satisfies
    \[
    \E[\er(g)] = O\left(\frac{d}{\gamma^2 m} \right).
    \]
\end{theorem}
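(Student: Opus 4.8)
The plan is to convert the statement into a bound on the \emph{joint} error of three independent AdaBoost runs, and then to remove the logarithmic factors of~\eqref{eq:ada} using this three-fold structure. Write $n=m/5$ for the size of each piece, and let $R_t=\{(x,y)\in\cX\times\{-1,1\}:f_t(x)\neq y\}$ be the error region of the $t$-th voting classifier. Since $S_1,\dots,S_5$ are i.i.d.\ draws from $\dd^{n}$, the classifiers $f_1,\dots,f_5$ are i.i.d.\ as well, and the majority vote $g$ errs on a point precisely when that point lies in at least three of $R_1,\dots,R_5$. A union bound over the $\binom{5}{3}=10$ triples, together with the symmetry of the pieces, gives
\[
\E_S[\er(g)]\;\le\;10\,\E_{S_1,S_2,S_3\sim\dd^{n}}\!\big[\dd(R_1\cap R_2\cap R_3)\big].
\]
So it suffices to prove that the expected $\dd$-measure of the common error region of three independent AdaBoost runs on $n$ samples is $O(d/(\gamma^2 n))=O(d/(\gamma^2 m))$. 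Note that controlling each $\er(f_t)$ in isolation is useless here: by~\eqref{eq:adalb} a single run has error $\Theta(\tfrac{d\ln n}{\gamma^2 n})$ on a worst-case distribution, so the entire logarithmic saving must come from the intersection.

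Next I would record the structural property of AdaBoost that makes the regions $R_t$ tractable. Running for $T=\Theta(\gamma^{-2}\ln n)$ rounds against a $\gamma$-weak learner, AdaBoost outputs $f_t=\sign(\sum_i\alpha_i h_i)$ with each $h_i\in\h$, and the classical margin analysis shows that, after normalizing the $\alpha_i$ into a convex combination, $f_t$ attains margin $\gamma_0=\Omega(\gamma)$ at every point of $S_t$ (the occasional failures of the weak learner on a reweighted sample contribute only a lower-order additive term, handled as in prior work). Hence each $R_t$ is the error region of some voting classifier over $\h$ with margin $\gamma_0$ on the i.i.d.\ sample $S_t$ --- a classifier from a family whose ``effective'' complexity is $O(d/\gamma^2)$ rather than $d$.

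The heart of the proof, and the step I expect to be the main obstacle, is to bound $\E[\dd(R_1\cap R_2\cap R_3)]$ by $O(d/(\gamma^2 n))$ with no surviving $\ln n$ factor. Applying an off-the-shelf generalization bound to each run and intersecting is too weak, since each such bound carries a $\ln n$ (unavoidable for AdaBoost by~\eqref{eq:adalb}); the improvement must exploit the mutual independence of $S_1,S_2,S_3$. I would follow the template of the analysis showing Bagging~\cite{baggingIsOptimal} is an optimal PAC learner (and of Hanneke's related optimal learner), where a majority over classifiers trained on (near-)independent samples is shown to have expected error $O(d/n)$, beating the $O(d\ln(n)/n)$ of a single consistent classifier: the point is that a point misclassified by a majority is misclassified jointly by several runs, and jointly misclassifying a large region forces so many independent consistency constraints that the $\ln n$ from counting configurations cancels. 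Transporting this here requires two steps I regard as the technical core: (i) replacing the VC dimension $d$ by the margin-based effective dimension $O(d/\gamma^2)$ of AdaBoost's output without reintroducing extra $\ln n$ or $\log(1/\gamma)$ factors via the usual sub-sampling/fat-shattering reductions; and (ii) controlling fresh points at which one or more of the three classifiers has small but nonzero margin, where the margin structure gives no leverage and a careless argument loses a constant fraction of error. The argument appears to need a \emph{three}-fold intersection for this cancellation, which a majority of only $3$ classifiers would not supply --- this is what dictates partitioning $S$ into $5$ pieces. Granting $\E[\dd(R_1\cap R_2\cap R_3)]=O(d/(\gamma^2 n))$, the displayed inequality and $m=5n$ give $\E[\er(g)]=O(d/(\gamma^2 m))$.
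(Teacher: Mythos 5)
Your setup is correct and matches the paper: the union bound over the $\binom{5}{3}$ triples, reducing the theorem to bounding $\E[\dd(R_1\cap R_2\cap R_3)]$ by $O(d/(\gamma^2 n))$, the observation that AdaBoost attains margin $\Omega(\gamma)$ on its entire training set, and even the correct intuition that a three-fold intersection (hence a majority of five, not three) is what makes the logarithmic factor cancel. But the proof stops exactly where the real work begins: you explicitly write ``Granting $\E[\dd(R_1\cap R_2\cap R_3)]=O(d/(\gamma^2 n))$'' and list the two hardest steps as obstacles you regard as open. That bound is the entire theorem; without it the argument is a restatement of the claim, not a proof.

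The missing idea, which is the core of the paper's argument (following Aden-Ali et al.), is a \emph{level-set decomposition of the pointwise failure probability}. Define $p_x=\Pr_{S\sim\dd^n}[f_S(x)\neq c(x)]$ and partition $\cX$ into dyadic shells $R_i=\{x:p_x\in(2^{-i},2^{-i+1}]\}$. By independence of the three samples and Tonelli, $\E[\dd(R_1\cap R_2\cap R_3)]=\E_X[p_X^3]\le 8\sum_i 2^{-3i}\Pr[X\in R_i]$. One then bounds $\Pr[X\in R_i]\le O(2^{2i}d/(\gamma^2 m))$ by contradiction: condition on $R_i$, note a single $\gamma$-margin classifier trained on $m$ i.i.d.\ points has expected conditional error at most $\sqrt{a d/(\Pr[R_i]\gamma^2 m)}$ (a Rademacher-type margin bound, losing a square root but \emph{no} log factor), while the definition of $R_i$ forces that same expected conditional error to be at least $2^{-i}$; if $\Pr[R_i]$ were too large these would conflict. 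Plugging the bound on $\Pr[X\in R_i]$ into the sum gives a geometric series $\sum_i 2^{2i}\cdot 2^{-3i}=\sum_i 2^{-i}=O(1)$. Notice this directly addresses your worries (i) and (ii): the $\sqrt{d/(\gamma^2 m)}$ margin bound replaces the VC dimension without any extra $\ln n$ or $\log(1/\gamma)$, and points with small nonzero margin are handled automatically because the decomposition is by failure probability, not by margin. Your suggestion to ``follow the Bagging/Hanneke template'' points in a related direction but does not supply this decomposition, and those realizable-setting arguments do not transfer to margin classifiers without exactly the substitutions sketched above.
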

In particular, Theorem~\ref{thm:main} implies that $\E[\er(g)] \leq \eps$ when given $m = \Theta(d/(\gamma^2 \eps))$ samples. This is a slightly weaker guarantee than the alternative optimal weak-to-strong learners in the sense that we do not provide high probability guarantees (i.e.\ with probability $1-\delta$). On the other hand, our algorithm is extremely simple and has a running time comparable to AdaBoost. Furthermore, the proof that AdaBoost is sub-optimal (see~\eqref{eq:adalb}) shows that even the expected error of AdaBoost is sub-optimal by a logarithmic factor. It is interesting that combining a constant number of voting classifiers trained by AdaBoost makes it optimal when a single AdaBoost is provably sub-optimal. Let us also comment that the analysis of Algorithm~\ref{alg:maj29} is based on recent work by Aden-Ali et al.~\cite{majorityofthree} on optimal PAC learning in the realizable setting, demonstrating new applications of their techniques. Furthermore, we believe the number $5$ is an artifact of our proof and we conjecture that it can be replaced with $3$ by giving a better generalization bound for \emph{large margin} voting classifiers. See \cref{sec:analysis} for further details.

\paragraph{Empirical Comparison.}
Our second contribution is a pilot empirical study, which gives the first empirical comparison between the alternative optimal weak-to-strong learners,
both the algorithm of Larsen and Ritzert~\cite{optimalweaktostronglearning}, the Bagging+Boosting based algorithm~\cite{baggingIsOptimal}, our new \textsc{Majority-of-5} algorithm, as well as classic AdaBoost. We give the full details of the alternative algorithms in \cref{sec:others}. In our experiments, we compare their performance both on real-life data as well as the data distribution used by H\o gsgaard et al.~\cite{adaboostNotOptimal} in their proof that AdaBoost is sub-optimal as shown in~\eqref{eq:adalb}. Our pilot empirical study give a indication that our new algorithm \textsc{Majority-of-5} may outperform previous algorithms on large data sets, whereas Bagging+Boosting performs best on small data sets. See \cref{sec:experiments} for further details and the results of these experiments.

\section{Previous Optimal Weak-to-Strong Learners}
\label{sec:others}
In this section, we present the two previous optimal weak-to-strong learners. The first such algorithm, by Larsen and Ritzert~\cite{optimalWeakToStrong}, builds on a sub-sampling technique due to Hanneke~\cite{hanneke2016optimal} in his seminal work on optimal PAC learning in the realizable setting. This sub-sampling technique, named \textsc{SubSample}, is shown as Algorithm~\ref{alg:subsample}. 

\begin{algorithm}
  \DontPrintSemicolon
  \KwIn{Training set $S$, Stash $T$
  }
  \KwResult{List of training sets $L$.}

    \If{$|S| < 4$}{

        Let $L$ contain the single training set $S \cup T$.

        \Return{$L$}
    }
  
    Partition $S$ into $4$ disjoint pieces $S_0,S_1,S_2,S_3$ of size $|S|/4$ each.
    
    Let $L$ be an empty list.

    Append \textsc{SubSample}($S_0$, $T \cup S_2 \cup S_3$) to $L$.

    Append \textsc{SubSample}($S_0$, $T \cup S_1 \cup S_3$) to $L$.

    Append \textsc{SubSample}($S_0$, $T \cup S_1 \cup S_2$) to $L$.

    \Return{$L$}

  \caption{\textsc{SubSample}($S, T$)}\label{alg:subsample}
\end{algorithm}

Given a training set $S$, \textsc{SubSample} generates a list $L$ of subsets $S_i \subset S$. The list $L$ has size $m^{\lg_4 3} \approx m^{0.79}$ when invoking \textsc{SubSample}($S,\emptyset$) for a training set $S$ of size $m$. Larsen and Ritzert now give an optimal weak-to-strong learner using \textsc{SubSample} as a sub-routine as shown in Algorithm~\ref{alg:LR}.

\begin{algorithm}
  \DontPrintSemicolon
  \KwIn{Training set $S=(x_1,y_1),\dots,(x_m,y_m)$. Weak learner $\cW$.
  }
  \KwResult{Hypothesis $g : \cX \to \{-1,1\}$.}
  
    Invoke \textsc{SubSample}($S,\emptyset$) to obtain list $L=S_1,\dots,S_k$.
    
    \For{$t=1,\dots,k$}{
        Run AdaBoost on $S_t$ with $\cW$ to obtain $f_t : \cX \to \{-1,1\}$.
    }

    $g \gets \sign(\sum_t f_t)$.

    \Return{$g$}

  \caption{\textsc{LarsenRitzert}($S, \cW$)}\label{alg:LR}
\end{algorithm}

Finally, the algorithm by Larsen~\cite{baggingIsOptimal} based on Bagging (a.k.a.\ Bootstrap Aggregation) by Breiman~\cite{bagging}, combines AdaBoost with sampling subsets of the training data with replacement. Unlike the algorithm above by Larsen and Ritzert, it requires a target failure probability $\delta$ as input. The algorithm is shown as Algorithm~\ref{alg:bagged}.

\begin{algorithm}
  \DontPrintSemicolon
  \KwIn{Training set $S=(x_1,y_1),\dots,(x_m,y_m)$. Weak learner $\cW$. Failure probability $0<\delta<1$.
  }
  \KwResult{Hypothesis $g : \cX \to \{-1,1\}$.}
    
    \For{$t=1,\dots,O(\ln(m/\delta))$}{
        Let $S_t$ be a set of $m$ independent samples with replacement from $S$.
    
        Run AdaBoost on $S_t$ with $\cW$ to obtain $f_t : \cX \to \{-1,1\}$.
    }

    $g \gets \sign(\sum_t f_t)$.

    \Return{$g$}

  \caption{\textsc{BaggedAdaBoost}($S, \cW, \delta$)}\label{alg:bagged}
\end{algorithm}

\section{Analysis of \textsc{Majority-of-5}}
\label{sec:analysis}
In this section, we give the proof of \cref{thm:main}, showing that our new algorithm \textsc{Majority-of-5} has an optimal expected error. Before giving the formal details, we present the main ideas in our proof. Our analysis is at a high level inspired by recent work of Aden-Ali et al.~\cite{majorityofthree} for realizable PAC learning. The first ingredient we need is the notion of margins. For a voting classifier $f(x) = \sign(\sum_{h \in \h} \alpha_h h(x))$ with $\alpha_h \geq 0$ for all $h$, consider the function $f'(x) = \sum_{h \in \h} \alpha'_h h(x)$ with $\alpha'_h = \alpha_h/\sum_h \alpha_h$. That is, $f'$ is simply the voting classifier $f$ without the $\sign(\cdot )$ and normalized to have coefficients summing to $1$. The margin of $f$ on a sample $(x,c(x))$ is then $c(x)f'(x) \in [-1,1]$. The margin is $1$ if all hypotheses combined by $f$ agree and are correct. It is $0$ if half of the mass is on hypothesis that are correct and half of the mass is on the hypothesis that are wrong. We can thus think of the margin as a confidence of the voting classifier. Margins have been extensively studied in the context of boosting and were originally introduced to give theoretical justification for the impressive practical performance of AdaBoost~\cite{bartlett}. In particular, there are strong generalization bounds for voting classifiers with large margins~\cite{bartlett,breiman1999prediction,gao}. Indeed, the best known sample complexity bound for AdaBoost, as stated in~\eqref{eq:ada}, is derived by showing that AdaBoost produces a voting classifier with margins $\Omega(\gamma)$.

Returning to our outline of the proof of \cref{thm:main}, recall that the optimal error for weak-to-strong learning as a function of the number of samples $m$ is $O(d/(\gamma^2 m))$. Now assume we can prove that for a set of $m$ i.i.d.\ samples from a distribution $\dd$, the expected maximum error under $\dd$ of any voting classifier that has margins $\Omega(\gamma)$ on all the samples, is no more than $O(\sqrt{d/(\gamma^2 m)})$. This fact follows from previous work on Rademacher complexity. Note that this is sub-optimal compared to our target error by a polynomial factor since $\sqrt{x} \geq x$ for $x$ between $0$ and $1$. We want to argue that combining $5$ instantiations of AdaBoost on disjoint training sets reduces this expected error to optimal $O(d/(\gamma^2 m))$.

For this argument, consider running AdaBoost on $n=m/5$ samples. For any $x \in \cX$, consider the probability $p_x = \Pr_{S \sim \dd^n}[f_S(x) \neq c(x)]$ where $f_S$ is the hypothesis produced by AdaBoost on $S$ and $c(x)$ is the correct label of $x$. Inspired by Aden-Ali et al.~\cite{majorityofthree}, we now partition the input domain $\cX$ into sets $R_i$, such that $R_i$ contains all $x$ for which $p_x \in (2^{-i},2^{-i+1}]$. The crucial observation is that if we consider $k$ independently trained AdaBoosts, then the probability they all err on $x$ is precisely $p_x^k$. Since a majority vote among 5 classifiers only fails when at least 3 of the involved classifiers fail, combining $5$ AdaBoosts intuitively reduces the contribution to the expected error from points $x \in R_i$ to $\Pr_{X \sim \dd}[X \in R_i] 2^{-3i}$. What remains is thus to argue that $\Pr[X \in R_i]$ is small.

This last step is done by considering the distribution $\dd_i$, which is $\dd$ conditioned on receiving a sample from $R_i$. The expected number of samples we see from $R_i$ is $m_i=\Pr[X \in R_i]m$. Furthermore, since AdaBoost obtains margins $\Omega(\gamma)$ on all its training data, it in particular obtains margins $\Omega(\gamma)$ on all its samples from $\dd_i$. This leads to an error probability of $p_i=O(\sqrt{d/(\gamma^2 m_i)})$ under $\dd_i$. But the definition of $R_i$ implies $p_i \geq 2^{-i}$. Hence $\sqrt{d/(\gamma^2 m_i)} = \Omega(2^{-i}) \Rightarrow m_i = O(2^{2i} d/\gamma^2) \Rightarrow \Pr[X \in R_i] = O(2^{2i} d/(\gamma^2 m))$. By summing over all $R_i$, the final expected error is hence
\[
\sum_{i=1}^\infty \frac{2^{2i} d 2^{-3i}}{\gamma^2 m} = O(d/(\gamma^2 m)).
\]
This completes the proof overview. Let us end by making a few remarks. First, it is worth noting that the generalization bound $O(\sqrt{d/(\gamma^2 m)})$ seems much worse than the bound in~\eqref{eq:ada} claimed for AdaBoost and other voting classifiers with large margins. Unfortunately, if we examine~\eqref{eq:ada} carefully and state $\eps$ as a function of $m$, we get $\eps = O(d \ln(m/d) \ln m/(\gamma^2 m))$. The problem is that the two log-factors are not bounded by a polynomial in $d/(\gamma^2 m)$. In particular for $m = Cd/\gamma^2$ with $C>0$ a constant, any polynomial in $d/(\gamma^2 m)$ must be constant. But $\ln m = \ln(Cd/\gamma^2)$ is not a constant independent of $d$ and $\gamma$. Thus we have to use the generalization bound with $\sqrt{\cdot }$ that fortunately is within a polynomial factor of optimal for the full range of $m$. Let us also comment that if the lower bound for AdaBoost stated in~\eqref{eq:adalb} is tight also for $\gamma$-margin voting classifiers, i.e.\ matched by an upper bound, then it suffices to take a majority of 3 AdaBoosts for optimal sample complexity.

This concludes the description of the high level ideas in our proof.
\subsection{Formal Analysis}
We now give the formal details of the proof. We start by introducing some notation.
\paragraph{Preliminaries.}
For a hypothesis set $\h$, we let $\Dh$ denote the set of \emph{linear} classifiers using hypothesis from $\h$ that is 
\begin{align*}
\Dh=\left\{ f\in\{ -1,1 \}^\mathcal{X}: f=\sum_{h\in\h} \alpha_{h}^{f}h, \forall h\in \h, \alpha_{h}^{f}\ge0,\sum_{h\in\h} \alpha_{h}^{f}=1 \right\}.
\end{align*}
Note that we have termed these \emph{linear} classifiers rather than voting classifier, to distinguish that we have not yet applied a $\sign(\cdot)$ and insist on normalizing the coefficients so they sum to $1$. We define the $\sign$ function as being 1 when the value is non-negative (so also $1$ when $x=0$) and $-1$ when negative.

We let $c\in \{ -1,1 \}^{\mathcal{X}}$ be a true labeling that we are trying to learn. For a distribution $\dd$ over $\mathcal{X}$, we define the expected loss of $f\in[ -1,1 ]^{\mathcal{X}}$ as 
$\ld(f):=\E_{X\sim \mathcal{D}}\left[1\{ \sign(f)(X)\not= c(X)\}\right]$
and we will use $S\in\mathcal{X}^m$ to denote a point set of $m$ i.i.d.\ samples from $\dd$ i.e.\ $S\sim \dd^m$ (we see the point set as a vector so we allow repetition of points).
Define for any $k\in\mathbb{N}$ the majority of $k$ linear classifiers $f_{1},\ldots,f_{k}\in[-1,1]^{\mathcal{X}}$ as  $Maj(f_1,\ldots,f_k)(x)=\sign(\sum_{i}\sign(f_i(x)))$. Let $s$ be a set of points, i.e.\ $s\in \cup_{i=1}^{\infty}\mathcal{X}^{i}$ and $c(s)\in \{ -1,1 \}^{|s|}$ the labeling of the points that $s$ contains with $c$, that is $c(s)_i=c(s_i)$. We define a $\gamma$-margin classifier algorithm $f:\cup_{i=1}^{\infty}(\mathcal{X}\times \{ -1,1 \})^{i}\Rightarrow [ -1,1 ]^{\mathcal{X}}$ to be a mapping that takes as input a point set with labels $(s,c(s))\in\cup_{i=1}^{\infty}(\mathcal{X}\times \{ -1,1 \})^{i}$ and outputs a function $f(s,c(s))(\cdot)$ from $\mathcal{X}$ to the interval $[ -1,1 ]$, where $f(s,c(s))(\cdot)$ is such that for $x\in s$,  $f(s,c(s))(x)c(x)\geq \gamma$. In the following we will use $f_{s}$ to denote $f(s,c(s))$ and write $f\in\Dh$ if for any $(s,c(s))\in\cup_{i=1}^{\infty}(\mathcal{X}\times \{ -1,1 \})^{i}$ we have that the output of the $\gamma$-margin classifier algorithm $f(s,c(s))$ is in $\Dh$.
\paragraph{Analysis.}
We now prove \cref{thm:main}, which is a direct consequence of the following \cref{cor:upperbound}. \cref{cor:upperbound} states that running a $\gamma$-margin classifier algorithm on $5$ disjoint training sets of size $m$ and forming the majority vote of the produced $5$ classifiers, has the optimal $O(d/(\gamma^2m))$ expected error. Since AdaBoost, after $O(\ln(m)/\gamma^2)$ iterations, has $\Omega(\gamma)$ margins on all points \cite{boostingbook} [Section 5.4.1], \cref{cor:upperbound} gives the claim in \cref{thm:main}. Alternatively one could run AdaBoost$_{v}^*$ \cite{AdaBoostvstar} instead of AdaBoost. The proof of \cref{cor:upperbound} follows the method used in \cite{majorityofthree} where the authors show a similar bound for PAC learning in the realizable setting. We now state \cref{cor:upperbound}.
\begin{corollary}\label{cor:upperbound}
For any distribution $\dd$ over $\mathcal{X}$, hypothesis set $\h$ with VC-dimension $d$, i.i.d.\ point sets $S_{1},\ldots,S_{5}$ from $\dd^{m}$, margin $0<\gamma\leq 1$ and $f\in\Dh$ being a $\gamma$-margin classifier algorithm, we  have that 
\begin{align*}
\E_{S_1,\ldots,S_{5}\sim \dd^{m}}\left[\er \left(Maj(f_{S_1},\ldots,f_{S_{5}} )\right)\right]=O\left(\frac{d}{\gamma^2m}\right).
\end{align*}
\end{corollary}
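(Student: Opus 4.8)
The plan is to fill in the three-part strategy from the proof overview. Fix the target labeling $c$ and, for $x\in\cX$, write $p_x=\p_{S\sim\dd^m}[\sign(f_S)(x)\neq c(x)]$ for the probability that a single run of the $\gamma$-margin classifier algorithm misclassifies $x$. Since $S_1,\dots,S_5$ are independent, for each fixed $x$ the five events $\{\sign(f_{S_j})(x)\neq c(x)\}$ are independent of probability $p_x$ each, and $Maj(f_{S_1},\dots,f_{S_5})$ errs on $x$ exactly when at least three of the five classifiers do; a union bound over the $\binom{5}{3}=10$ triples bounds this by $10p_x^3$. Interchanging expectations gives $\E[\er(Maj(f_{S_1},\dots,f_{S_5}))]\le 10\,\E_{X\sim\dd}[p_X^3]$, so the whole task reduces to showing $\E_{X\sim\dd}[p_X^3]=O(d/(\gamma^2m))$. (Then \cref{thm:main} follows by applying \cref{cor:upperbound} with sample size $m/5$ and margin $\Omega(\gamma)$, since AdaBoost run for $O(\gamma^{-2}\ln m)$ rounds outputs a member of $\Dh$ with margin $\Omega(\gamma)$ on its whole training set.)

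For the reduced claim I would split $\cX$ into level sets $R_i=\{x:2^{-i}<p_x\le 2^{-i+1}\}$, $i\ge1$, with $q_i=\p_{X\sim\dd}[X\in R_i]$; points with $p_x=0$ contribute nothing, so $\E_{X\sim\dd}[p_X^3]\le 8\sum_{i\ge1}q_i2^{-3i}$, and it suffices to prove $q_i=O(2^{2i}d/(\gamma^2m))$, for then $\sum_{i\ge1}2^{-i}<\infty$ closes the bound. To bound $q_i$, let $\mathcal D_{R_i}$ be $\dd$ conditioned on $R_i$; swapping expectations, $\E_{S\sim\dd^m}[\erri(f_S)]=\E_{x\sim\mathcal D_{R_i}}[p_x]>2^{-i}$. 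For a matching upper bound, condition on the number $L$ of coordinates of $S$ that fall in $R_i$ (so $L\sim\mathrm{Bin}(m,q_i)$) and, given $L=\ell$, on which coordinates these are and on the remaining samples: the $\ell$ in-$R_i$ points are then i.i.d.\ from $\mathcal D_{R_i}$, and $f_S\in\Dh$ has $\gamma$-margin on them whatever the other samples are. The uniform margin-based Rademacher bound from the overview (bound the $0/1$ loss by the $1/\gamma$-Lipschitz ramp loss, apply Talagrand contraction, use $\mathrm{Rad}_\ell(\Dh)=\mathrm{Rad}_\ell(\h)$ and the logarithm-free $\mathrm{Rad}_\ell(\h)=O(\sqrt{d/\ell})$) applies uniformly to all of $\Dh$, hence to $f_S$, giving $\E[\erri(f_S)\mid L=\ell]\le c_0\sqrt{2d/(\gamma^2(\ell+1))}$ for an absolute constant $c_0$ and all $\ell\ge0$ (combined with the trivial bound $\le1$). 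Taking expectation over $L$, Jensen together with the identity $\E[1/(L+1)]=(1-(1-q_i)^{m+1})/((m+1)q_i)\le 1/((m+1)q_i)$ yields $2^{-i}<c_0\sqrt{2d/\gamma^2}\,\min(1,1/\sqrt{(m+1)q_i})$; solving both cases of the minimum (using $d\ge1$, $\gamma\le1$) gives $q_i=O(2^{2i}d/(\gamma^2m))$.

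The step I expect to be the crux is the bound on $q_i$, for two reasons. First, the margin generalization bound must be invoked in its \emph{uniform} form---holding for every $g\in\Dh$ with $\gamma$-margin on the relevant subsample---so that it legitimately applies to $f_S$, which depends on all of $S$ and not merely on its $R_i$-portion; this is why a Rademacher-complexity bound is the right tool rather than anything specific to the learner. Second, one is forced to use the $\sqrt{\cdot}$-type bound with no extra logarithmic factor: a stray $\ln m$ or $\ln(1/\gamma)$---such as would arise from the sharper $\widetilde O(d/(\gamma^2\ell))$ margin bounds---would not be absorbed by the geometric sum over $i$ and would break optimality, precisely the phenomenon noted in the overview. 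The other ingredients (the $\binom{5}{3}$ union bound, the level-set split, and the elementary computation of $\E[1/(L+1)]$ for a binomial) are routine.
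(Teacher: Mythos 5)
Your overall architecture matches the paper's exactly: the $\binom{5}{3}p_x^3$ union-bound reduction, the interchange of expectations to reduce to $\E_X[p_X^3]$, the dyadic level sets $R_i$, the target bound $q_i=O(2^{2i}d/(\gamma^2 m))$, and the twin estimates $\E_S[\erri(f_S)]>2^{-i}$ versus an upper bound of order $\sqrt{d/(\gamma^2 q_i m)}$. Where you genuinely diverge is in proving that upper bound (the content of the paper's \cref{lem15}). The paper first discards the low-probability event that fewer than $\p[R]m/2$ samples land in $R$ via a Chernoff bound, then, conditioned on the count being exactly $i$, invokes the \emph{high-probability} margin generalization bound (\cref{cor1}) and integrates its tail $\int_{r_i}^\infty\p[\err(f_S)\ge x\,|\,E_i]\,dx$ using a Gaussian comparison. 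You instead condition on the exact binomial count $L$, apply the \emph{in-expectation} Rademacher/contraction bound $\E[\erri(f_S)\mid L=\ell]\lesssim\sqrt{d/(\gamma^2(\ell+1))}$ directly, and then use Jensen together with the exact formula $\E[1/(L+1)]=(1-(1-q_i)^{m+1})/((m+1)q_i)$. Both are valid; yours is shorter and avoids both the Chernoff step and the tail integration, at the cost of needing the expectation form of the uniform margin bound (which does follow from the same symmetrization-plus-contraction machinery the paper cites, just without the McDiarmid concentration step). One point worth spelling out if you write this up fully: the conditioning must be phrased so that the $\ell$ in-$R_i$ samples are genuinely i.i.d.\ $\mathcal D_{R_i}$ and the uniform-over-$\Dh$ bound is what licenses applying it to the data-dependent $f_S$; you do say this, and it is exactly the same structural care the paper takes in its proof of \cref{lem15}.
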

The result in \cref{cor:upperbound} is primarily a consequence of the following \cref{lem16}, which says that, in expectation, the probability that $f_{S_1},f_{S_2},f_{S_{3}}$ all misclassifying a sample from $\dd$ is $O(d/(\gamma^2m))$. We now state \cref{lem16} and give the proof of \cref{cor:upperbound}. We postpone the proof of \cref{lem16} to later in this section. 
\begin{restatable}{lemma}{lemsixteen}\label{lem16}
For any distribution $\dd$ over $\mathcal{X}$, hypothesis set $\h$ with VC-dimension $d$, i.i.d.\ point sets $S_{1},S_2,S_{3}$ from $\dd^{m}$, margin $0<\gamma\leq 1$ and $f\in\Dh$ being a $\gamma$-margin classifier algorithm we have that  
\begin{align*}
\E_{S_1,S_2,S_{3}\sim \dd^{m}}\left[\p_{X\sim D}\left[\cap_{i=1}^{3} \{\sign(f_{S_i}(X))\not=c(X)\}\right]\right]=O\left(\frac{d}{\gamma^2m}\right).
\end{align*}
\end{restatable}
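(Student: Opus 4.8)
The plan is to follow the high-level argument sketched in \cref{sec:analysis}. For $x\in\mathcal X$ write $p_x:=\p_{S\sim \dd^m}\big[\sign\big(f_S(x)\big)\neq c(x)\big]$. First I would rewrite the inner probability as an expectation of a product of indicators and, since $S_1,S_2,S_3$ are independent and all quantities are nonnegative, swap the order of integration (Tonelli) and factor over the three copies, obtaining
\[
\E_{S_1,S_2,S_3\sim \dd^m}\Big[\p_{X\sim \dd}\big[\cap_{i=1}^{3}\{\sign(f_{S_i}(X))\neq c(X)\}\big]\Big]=\E_{X\sim \dd}\big[p_X^{3}\big].
\]
It therefore suffices to prove $\E_{X\sim \dd}[p_X^{3}]=O\!\big(d/(\gamma^{2}m)\big)$.

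The second step is to partition $\mathcal X$ by the magnitude of $p_x$. Let $R_i:=\{x\in\mathcal X:\ 2^{-i}<p_x\le 2^{-i+1}\}$ for $i\ge 1$; the set $\{x:p_x=0\}$ contributes nothing. Writing $q_i:=\p_{X\sim\dd}[X\in R_i]$ and using $p_x\le 2^{-i+1}$ on $R_i$ gives $\E_{X\sim\dd}[p_X^{3}]\le\sum_{i\ge1}q_i\,2^{-3i+3}$. Since $\sum_{i\ge1}2^{2i}2^{-3i}=\sum_{i\ge1}2^{-i}=O(1)$, the whole lemma reduces to showing
\[
q_i=O\!\left(\frac{2^{2i}d}{\gamma^{2}m}\right)\qquad\text{for every }i\ge 1.
\]

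To establish this, fix $i$ with $q_i>0$, let $\dd_i$ denote $\dd$ conditioned on the event $\{X\in R_i\}$, and set $m_i:=q_i m$. I would bound $\E_{X\sim\dd_i}[p_X]$ from both sides. Since $p_x>2^{-i}$ on $R_i$, trivially $\E_{X\sim\dd_i}[p_X]>2^{-i}$. For an upper bound, swapping integration order again yields $\E_{X\sim\dd_i}[p_X]=\E_{S\sim\dd^{m}}\big[\p_{X\sim\dd_i}[\sign(f_S(X))\neq c(X)]\big]$, and here I would condition on $M$, the number of points of $S$ lying in $R_i$, which is $\mathrm{Bin}(m,q_i)$-distributed. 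Given $M=k\ge1$, the points of $S$ in $R_i$ form an i.i.d.\ sample $S'\sim\dd_i^{k}$ (independent of the remaining points of $S$); since $f_S\in\Dh$ has margin $\ge\gamma$ on all of $S$, hence on $S'$, we have
\[
\p_{X\sim\dd_i}[\sign(f_S(X))\neq c(X)]\ \le\ \sup\big\{\p_{X\sim\dd_i}[\sign(f)(X)\neq c(X)]:\ f\in\Dh,\ f\text{ has margin }\ge\gamma\text{ on }S'\big\},
\]
and the expectation of this supremum over $S'\sim\dd_i^{k}$ is $O\!\big(\sqrt{d/(\gamma^{2}k)}\big)$ by the Rademacher-complexity-based margin generalization bound for $\Dh$ recalled in \cref{sec:analysis}; for $M=0$ I use the trivial bound $1$. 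Combining over the conditioning and inserting the standard binomial estimates $\p[M=0]\le e^{-m_i}$, $\E[1/(M+1)]\le 1/m_i$, and hence $\E\big[1\{M\ge1\}/\sqrt M\big]\le\sqrt{2/m_i}$ (Jensen), we get $2^{-i}<e^{-m_i}+O\!\big(\sqrt{d/(\gamma^{2}m_i)}\big)$. A short case split then finishes the claim: if $e^{-m_i}\ge 2^{-i-1}$ then $m_i=O(i)$, and otherwise the $\sqrt{\cdot}$ term must exceed $2^{-i-1}$, forcing $m_i=O\!\big(2^{2i}d/\gamma^{2}\big)$; using $d\ge 1\ge\gamma^{2}$ and $i\le 2^{2i}$, both cases give $m_i=O\!\big(2^{2i}d/\gamma^{2}\big)$, i.e.\ $q_i=m_i/m=O\!\big(2^{2i}d/(\gamma^{2}m)\big)$, as needed.

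The step I expect to be most delicate is applying the \emph{uniform} (over all of $\Dh$) margin generalization bound to the \emph{data-dependent} classifier $f_S$: this is legitimate precisely because the bound holds simultaneously for every $f\in\Dh$ and because conditioning on $M=k$ exhibits the points of $S$ landing in $R_i$ as a genuine fresh i.i.d.\ sample from $\dd_i$ on which $f_S$ is guaranteed to have margin $\ge\gamma$. One also has to take care that $x\mapsto p_x$ is measurable (so that $R_i$ and $\dd_i$ are well defined) and, importantly, to invoke a margin bound \emph{without} an extraneous logarithmic factor, since an extra $\log m$ there would destroy the target $O\!\big(d/(\gamma^{2}m)\big)$ — indeed even a stray $\ln(1/\gamma)$ would survive the geometric summation over $i$.
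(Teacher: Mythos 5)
Your proof is correct and follows the same high-level plan as the paper: reduce to $\E_{X\sim\dd}[p_X^3]$ by Tonelli, dyadically partition $\mathcal X$ by the magnitude of $p_x$, bound $\E_{X\sim\dd}[p_X^3]\le 8\sum_i 2^{-3i}q_i$, and establish $q_i=O(2^{2i}d/(\gamma^2 m))$ by pinching $\E_{X\sim\dd_i}[p_X]$ between the trivial lower bound $2^{-i}$ and a margin-generalization upper bound for the conditional distribution $\dd_i$. This is exactly the paper's proof of \cref{lem16}, which packages the upper bound on $\E_{X\sim\dd_i}[p_X]=\E_S[\erri(f_S)]$ as the separate \cref{lem15}.

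The one place where you diverge is inside this last step, i.e., your inlined replacement for \cref{lem15}. The paper restricts attention (via a Chernoff bound) to the event $|S\cap R_i|\ge \p[R_i]m/2$, and on each slice $|S\cap R_i|=k$ obtains the conditional expectation by integrating the tail of the high-probability bound \cref{cor1} (hence the normal-density computation). You instead condition on the binomial count $M=|S\cap R_i|$, invoke the \emph{expectation} form of the Rademacher margin bound (giving $O(\sqrt{d/(\gamma^2 k)})$ directly, without a $\delta$ term and hence without any tail integration), and close with the elementary binomial-moment estimates $\p[M=0]\le e^{-m_i}$ and $\E[\mathbbm 1\{M\ge1\}/\sqrt M]\le\sqrt{2/m_i}$ (via $\E[1/(M+1)]\le 1/m_i$ and Jensen), followed by a two-case split in place of the paper's proof by contradiction. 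Both routes deliver the same $O(\sqrt{d/(\gamma^2\p[R_i]m)})$ bound; yours is somewhat shorter because it sidesteps the Chernoff-event bookkeeping and the tail integral, at the cost of needing the in-expectation version of the margin bound rather than the high-probability version stated as \cref{cor1}. You also correctly flag the two genuinely delicate points — that the uniform-over-$\Dh$ bound is what lets you apply it to the $S$-dependent $f_S$, and that the margin bound must be free of logarithmic factors — both of which the paper's argument also relies on.
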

\begin{proof}[Proof of Corollary~\ref{cor:upperbound}]
For the majority of $f_{S_1},\ldots,f_{S_{5}}$ to fail on an $x \in \cX$, it must be the case that at least $3$ of the trained $\gamma$-margin classifiers $f_{S_i}$ have $\sign(f_{S_i}(x))\neq c(x)$. Using this combined with the $S_i$'s being i.i.d.\ we get that
\begin{align*}
 &\E_{S_1,\ldots,S_{5}\sim \dd^{m}}\left[\er\left(Maj(f_{S_1},\ldots,f_{S_{5}}) \right)\right]
\\ \leq &
 \sum_{1\leq j_1<j_2<j_{3}\leq 5} \E_{S_{j_1},S_{j_2},S_{j_{3}}\sim \dd^{m}}\left[\p_{X\sim D}\left[\cap_{i=1}^{3} \{\sign(f_{S_{j_{i}}}(X))\not=c(X)\}\right]\right]
\\  \le&
\E_{S_{1},S_{2},S_{{3}}\sim \dd^{m}}\left[\p_{X\sim D}\left[\cap_{i=1}^{3} \{\sign(f_{S_{i}}(X))\not=c(X)\}\right]\right] \sum_{1\leq j_1<j_2<j_{3}\leq 5} 1.\end{align*}
Now using \cref{lem16} and $\binom{5}{3} = O(1)$, we get that the above is $O\left(\frac{d}{\gamma^2m}\right)$ as claimed.
\end{proof}
We now move on to prove \cref{lem16}. For this we need \cref{lem15} the following
\begin{restatable}{lemma}{lemfifteen}\label{lem15}
Let $a>1$ denote a universal constant.  
For $\dd$ a distribution, $R$ a subset of $\mathcal{X}$ such that $\p\left[R\right]:=\p_{X\sim \dd}\left[X\in R\right]\not=0$, hypothesis set $\h$ with VC-dimension d, $S$ a point set of $m$ i.i.d.\ points from $\dd$, margin $0<\gamma\leq1$ and $f\in\Dh$ being a $\gamma$-margin classifier algorithm we have that 
\begin{align*}
\E_{S}\left[\E_{X\sim\dr }\left[1\{ f_S(X)\not=c(X) \}\right]\right]=\E_{S}\left[\err(f_S)\right]\leq \sqrt{\frac{ad}{\p\left[R\right]\gamma^2m}}.
\end{align*}
where we use $\dr$ to denote the conditional distribution on the subset $R$. That is, for any measurable function $g$, $\E_{X\sim \dr}\left[  g(X) \right]:=\E_{X\sim \dd }\left[  g(X)  1\{ X\in R \}\right]/\p_{X\sim \dd}\left[X\in R\right]$.
\end{restatable}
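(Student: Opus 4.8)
The plan is to reduce \cref{lem15} to a standard margin-based generalization bound for the convex hull $\Dh$, applied to the random sub-sample of $S$ that lands in $R$. Write $N=|\{i:S_i\in R\}|$, so $N\sim\mathrm{Bin}(m,\p[R])$, and note that, conditioned on which indices of $S$ fall in $R$, those $N$ points form an i.i.d.\ sample $S_R\sim\dr^{N}$ that is independent of the remaining $m-N$ points. The structural fact I would exploit is that a $\gamma$-margin classifier algorithm outputs $f_S$ with $f_S(x)c(x)\ge\gamma$ for \emph{every} $x\in S$, hence for every $x\in S_R$ as well; so $f_S$ has zero empirical $\gamma$-margin loss on $S_R$.

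Second, I would set up the Rademacher bound. Let $\Phi_\gamma(z)=\min\{1,\max\{0,1-z/\gamma\}\}$ denote the $\gamma$-ramp loss; it is $\tfrac1\gamma$-Lipschitz, satisfies $\mathbbm{1}\{z\le0\}\le\Phi_\gamma(z)$, and equals $0$ for $z\ge\gamma$. Writing $\mathfrak{R}_n(\cdot)$ for the worst-case Rademacher complexity on $n$ points, in-expectation symmetrization together with Talagrand's contraction lemma (applied to the centered ramp loss), the identity $\mathfrak{R}_n(\Dh)=\mathfrak{R}_n(\h)$ for convex hulls, and the chaining bound $\mathfrak{R}_n(\h)=O(\sqrt{d/n})$ for a VC class of dimension $d$, yield that for any distribution $\dd'$ and any integer $n\ge1$,
\[
\E_{Z\sim\dd'^{n}}\Big[\sup_{g\in\Dh}\Big(\E_{X\sim\dd'}\big[\Phi_\gamma(g(X)c(X))\big]-\tfrac1n\sum_{x\in Z}\Phi_\gamma(g(x)c(x))\Big)\Big]\le\frac{c_0\sqrt d}{\gamma\sqrt n}
\]
for a universal constant $c_0$.

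Third, I would glue these together by conditioning. Fix the $m-N$ points of $S$ outside $R$; then $f_S$ is a deterministic function of $S_R\sim\dr^{N}$, so for each fixed value $n\ge1$ of $N$ the bound above applies with $\dd'=\dr$ and $g=f_S$ (whose empirical ramp loss on $S_R$ vanishes), giving $\E_S[\err(f_S)\mid N=n]\le\E_S[\E_{X\sim\dr}[\Phi_\gamma(f_S(X)c(X))]\mid N=n]\le c_0\sqrt d/(\gamma\sqrt n)$; and $\err(f_S)\le1$ trivially when $N=0$. Averaging over $N$,
\[
\E_S[\err(f_S)]\le\p[N=0]+\frac{c_0\sqrt d}{\gamma}\,\E\big[N^{-1/2}\mathbbm{1}\{N\ge1\}\big].
\]
Here $\p[N=0]=(1-\p[R])^{m}\le e^{-\p[R]m}\le1/\sqrt{\p[R]m}\le\sqrt{d/(\p[R]\gamma^2m)}$, using $\sqrt x\,e^{-x}\le1$ and $d/\gamma^2\ge1$; and by Jensen together with the reciprocal-moment estimate $\E[(N+1)^{-1}]\le((m+1)\p[R])^{-1}$ for a binomial, $\E[N^{-1/2}\mathbbm{1}\{N\ge1\}]\le\sqrt{2\,\E[(N+1)^{-1}]}\le\sqrt{2/(\p[R]m)}$. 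Summing the two contributions bounds $\E_S[\err(f_S)]$ by $(1+c_0\sqrt2)\sqrt{d/(\p[R]\gamma^2m)}$, establishing the lemma with $a=(1+c_0\sqrt2)^2$.

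The main obstacle is making the uniform-convergence step rigorous: $f_S$ is trained on all of $S$, not on $S_R$ alone, so it is not literally a classifier fit to a fixed number of i.i.d.\ draws from $\dr$, and the relevant sample size $N$ is itself random. The fix is exactly the conditioning above — freezing the points outside $R$ turns $f_S$ into a deterministic function of the genuine i.i.d.\ sample $S_R$, after which the margin bound applies verbatim — followed by averaging over the binomial $N$ with enough care that the small-$N$ (and $N=0$) regime contributes only $O(\sqrt{1/(\p[R]m)})$. The other point requiring attention is using the log-free Rademacher bound $\mathfrak{R}_n(\h)=O(\sqrt{d/n})$ (which needs Dudley's chaining rather than Massart's finite-class lemma), since an extra $\sqrt{\ln n}$ here would not be absorbable into the clean bound of \cref{lem15}.
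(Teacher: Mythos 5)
Your argument is correct, and it shares the paper's essential idea — condition on the number of points of $S$ landing in $R$, observe that those points form an i.i.d.\ sample from $\dr$ on which $f_S$ has $\gamma$-margins, and apply a margin-based generalization bound for $\Dh$ to the sub-sample — but the technical implementation is genuinely different in two places. First, the paper works from a \emph{high-probability} margin bound (\cref{cor1}, which carries a $\sqrt{\ln(1/\delta)/m}$ term) and then converts it to an expectation bound by writing $\E[\err(f_S)\mid E_i]=\int_0^\infty \p[\err(f_S)\ge x\mid E_i]\,dx$, splitting at $r_i=\sqrt{8Cd/(\gamma^2 i)}$, inverting $\delta$ as a function of $x$, and estimating the tail integral via a Gaussian-tail computation. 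You instead invoke the \emph{in-expectation} form of the Rademacher bound directly (symmetrization, Talagrand contraction on the $\gamma$-ramp, $\mathfrak{R}_n(\Dh)=\mathfrak{R}_n(\h)$, Dudley chaining giving $O(\sqrt{d/n})$), which skips the tail-integration step entirely — this is cleaner, since the paper's \cref{cor1} is itself typically derived by adding McDiarmid concentration to exactly this in-expectation statement. Second, for the random sub-sample size, the paper restricts attention to $\{N\ge\p[R]m/2\}$ via a Chernoff bound and bounds the complementary event's contribution by $1\cdot\p[\bar E]$, whereas you average the $c_0\sqrt{d}/(\gamma\sqrt{n})$ bound over the full binomial distribution using the reciprocal-moment identity $\E[(N+1)^{-1}]\le 1/((m+1)\p[R])$ together with Jensen, handling $N=0$ separately. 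Both routes give the same asymptotic constant structure; yours avoids the Chernoff/worst-case split at the cost of needing the reciprocal binomial moment, and the paper's avoids the reciprocal-moment computation at the cost of the tail integration. You are also slightly more careful than the paper in spelling out the conditioning (freezing the points outside $R$ so that $f_S$ becomes a deterministic function of the i.i.d.\ sample $S_R$), which is a point the paper glosses over when it writes "conditioned on $E_i$ ... thus we have by \cref{cor1}". In short: same decomposition, different and arguably tighter bookkeeping.
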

\begin{proof}[Proof of \cref{lem16}]
For $x\in\mathcal{X}$ let $p_{x}=\E_{S\sim \dd^{m}}\left[1\{ f_{S}(x)\not=c(x) \}\right]$ and define for $i=1,\ldots$ the sets $R_i=\{ x\in\mathcal{X}: p_{x}\in\left(2^{-i},2^{-i+1}\right] \}$. Now
using Tonelli, and that $S_1,S_2,S_{3}$ are i.i.d.\ with distribution $\dd^m$, and that $p_{x}\leq 2^{-i+1}$ for $x\in R_i$ we get that
\begin{align*}
&\E_{S_1,\ldots,S_{3}\sim \dd^{m}}\left[\p_{X\sim \dd}\left[\cap_{i=1}^{3} \{f_{S_i}(X)\not=c(X)\}\right]\right]\\
=&\E_{X\sim \dd}\left[\E_{S_1,\ldots,S_{3}\sim \dd^{m}}\left[1\{ \cap_{i=1}^{3} \{f_{S_i}(X)\not=c(X) \}\}\right]\right]\\
=&\E_{X\sim \dd}\left[p_{X}^{3}\right]=\sum_{i=1}^{\infty} \E_{X\sim \dd}\left[p_{X}^{3}\mid X\in R_i\right]\p\left[X\in R_i\right] \le2^{3}\sum_{i=1}^{\infty} 2^{-3i}\p\left[X\in R_i\right],
\end{align*}
thus if we can show that there exists a universal constant $c'>0$ such that $\p_{X\sim\dd}\left[X\in R_i\right]\leq \frac{c'd2^{2i}}{\gamma^2m}$ we get that
\begin{align*}
\E_{S_1,\ldots,S_{3}}\left[\p_{X\sim \dd}\left[\cap_{i=1}^{3} \{f_{S_i}(X)\not=c(X)\}\right]\right]\le2^{3}\frac{c'd}{\gamma^2m}\sum_{i=1}^{\infty} 2^{-i}=O\left(\frac{d}{\gamma^{2}m}\right),
\end{align*}
 and we are done.
Thus assume for contradiction that $\p_{X\sim\dd}\left[X\in R_i\right]> \frac{c'd2^{2i}}{\gamma^2m}$, for $c'>1$ to be chosen large enough. Using \cref{lem15} we have that there exist a universal constant $a>1$ such that
 \begin{align*}
 \E_{S\sim\dd^{m} }\left[\erri(f_S)\right]\le \sqrt{\frac{ad}{\p\left[R_i\right]\gamma^2m}}.
 \end{align*}
By Tonelli, the definition of $p_{x}$ and that for $x\in R_i$  we have $p_{x}\in\left(2^{-i},2^{-i+1}\right]$  we get that
\begin{align*}
\E_{S\sim\dd^{m} }\left[\erri(f_S)\right]=\E_{X\sim\dd_{R_i} }\left[\E_{S\sim\dd^m}\left[1\{f_S(X)\not=c(X)  \}\right]\right]=\E_{X\sim\dd_{R_i} }\left[p_{X}\right]\ge 2^{-i}
.\end{align*}
Combining the above lower and upper bound on $ \E_{S\sim\dd^{m} }\left[\erri\right]$ and $\p_{X\sim\dd}\left[X\in R_i\right]> \frac{c'd2^{2i}}{\gamma^2m}$ we get that 
\begin{align*}
1\le2^{i}\sqrt{\frac{ad}{\p\left[R_i\right]\gamma^2m}}\leq 2^{i}\sqrt{\frac{ad}{\frac{c'd2^{2i}}{\gamma^2m}\gamma^2m}}\leq \sqrt{\frac{a}{c'}},
\end{align*}
which for $c'$ sufficiently large is strictly less than $1$, thus we reached a contradiction. Hence it must be the case that  $\p_{X\sim\dd}\left[X\in R_i\right]\leq  \frac{c'd2^{2i}}{\gamma^2m}$, which concludes the proof of \cref{lem16}.
\end{proof}
We now give the proof of \cref{lem15}.
\ For this we need the following \cref{cor1} that gives a high probability bound on the error for all classifiers in $\Dh$ that have $\gamma$ margins on a training set $S$.
\begin{restatable}{corollary}{corollaryone}\label{cor1}
Let $C>1$ denote a universal constant. For hypothesis set $\h$ with VC-dimension $d$, distribution $\dd$, margin $0<\gamma\leq 1$, failure probability  $0<\delta<1$, and a point set $S \sim \dd^m$, we have with probability at least $1-\delta$ over $S$, that any $f\in\Dh$ such that $f(x)c(x)\geq \gamma$ for all $x\in S$ satisfies
\begin{align*}
\ld(f) \leq \sqrt{\frac{2Cd}{\gamma^2 m}}+ \sqrt{\frac{2\ln \left(2/\delta\right)}{m}}.
\end{align*}
\end{restatable}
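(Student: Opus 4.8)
The plan is to establish \cref{cor1} as a standard margin-based uniform-convergence bound for the convex hull $\Dh$ of the VC class $\h$. First I would introduce the truncated ramp loss $\phi_\gamma:\mathbb{R}\to[0,1]$ with $\phi_\gamma(z)=1$ for $z\le 0$, $\phi_\gamma(z)=1-z/\gamma$ for $0\le z\le\gamma$, and $\phi_\gamma(z)=0$ for $z\ge\gamma$; this function is $1/\gamma$-Lipschitz and satisfies $1\{\sign(f(x))\ne c(x)\}\le\phi_\gamma(c(x)f(x))$ pointwise. Hence for every $f\in\Dh$ we have $\ld(f)\le\E_{X\sim\dd}[\phi_\gamma(c(X)f(X))]$, while for any $f$ with margin at least $\gamma$ on all of $S$ the empirical average $\frac1m\sum_{x\in S}\phi_\gamma(c(x)f(x))$ vanishes. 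So it suffices to bound, uniformly over $\Dh$, the gap between the population and empirical averages of $\phi_\gamma\circ(c\cdot f)$.

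Next I would invoke the textbook Rademacher-complexity generalization bound: since $\phi_\gamma$ takes values in $[0,1]$, McDiarmid's bounded-differences inequality together with symmetrization gives, with probability at least $1-\delta$ over $S\sim\dd^m$, that every $f\in\Dh$ satisfies $\E[\phi_\gamma(c(X)f(X))]\le\frac1m\sum_{x\in S}\phi_\gamma(c(x)f(x))+2\,\mathcal{R}_m(\mathcal{G})+\sqrt{2\ln(2/\delta)/m}$, where $\mathcal{G}=\{x\mapsto\phi_\gamma(c(x)f(x)):f\in\Dh\}$ and $\mathcal{R}_m$ is the worst-case (over $S$) empirical Rademacher complexity; the factor $2/\delta$ and the $\sqrt{2/m}$ scaling come from the usual two-event/one-sided form of this inequality. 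Then, since multiplying the argument of $\phi_\gamma$ by the fixed sign $c(x_i)\in\{-1,1\}$ can be absorbed into the Rademacher variables, Talagrand's contraction lemma yields $\mathcal{R}_m(\mathcal{G})\le\gamma^{-1}\mathcal{R}_m(\Dh)$, and because $\Dh$ is the convex hull of $\h$ (a supremum of a linear functional over a convex hull is attained at an extreme point), $\mathcal{R}_m(\Dh)=\mathcal{R}_m(\h)$.

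It remains to bound $\mathcal{R}_m(\h)$, and here I would use the chaining (Dudley entropy integral) bound rather than the elementary finite-class bound. By Sauer--Shelah the growth function of $\h$ is $O((em/d)^d)$, so a Haussler-type packing estimate gives a covering number $N(\eps,\h,L_2(S))=(O(1/\eps))^{O(d)}$ for $\eps\in(0,1]$; feeding this into $\mathcal{R}_m(\h)\le\frac{C}{\sqrt m}\int_0^1\sqrt{\ln N(\eps,\h,L_2(S))}\,d\eps$ and using that $\int_0^1\sqrt{\ln(1/\eps)}\,d\eps$ converges yields the \emph{log-free} bound $\mathcal{R}_m(\h)\le C'\sqrt{d/m}$ for a universal constant $C'$. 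Combining the displays gives $\ld(f)\le 2\gamma^{-1}C'\sqrt{d/m}+\sqrt{2\ln(2/\delta)/m}$, which is of the claimed form $\sqrt{2Cd/(\gamma^2m)}+\sqrt{2\ln(2/\delta)/m}$ with $C=2(C')^2$. Alternatively one can simply cite an existing sharp margin generalization bound for convex combinations of a VC class, in the spirit of \cite{bartlett,gao}, and read off the constants.

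The step I expect to be the main obstacle is precisely this log-free Rademacher bound $\mathcal{R}_m(\h)=O(\sqrt{d/m})$: the Massart finite-class lemma only gives $O(\sqrt{(d/m)\ln(m/d)})$, and as the proof overview explains an extra logarithmic factor here would be fatal, since in the regime $m=\Theta(d/\gamma^2)$ it is not bounded by any polynomial in $d/(\gamma^2m)$. So the crux is to run the chaining argument carefully (or cite a reference that already does so) and to verify that every constant appearing is universal, i.e.\ independent of $d$, $\gamma$, $m$, and $\delta$. A secondary point needing care is the constant bookkeeping between the McDiarmid/contraction formulation and the stated terms $\sqrt{2Cd/(\gamma^2m)}$ and $\sqrt{2\ln(2/\delta)/m}$, but that is routine once the Rademacher bound is in hand.
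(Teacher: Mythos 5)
Your proposal is correct and follows essentially the same route the paper takes: a Bartlett--Mendelson style margin bound (ramp loss, McDiarmid plus symmetrization, Talagrand contraction, convex-hull equality) combined with the log-free Rademacher bound $\mathcal{R}_m(\h)=O(\sqrt{d/m})$ from Dudley chaining with Haussler's packing numbers. The paper proves the corollary by citing exactly these ingredients (the Schapire--Freund margin bound, Bartlett--Mendelson, and the tight VC/Dudley bound) rather than spelling out the argument, and you correctly identify that the log-free Rademacher bound is the crucial non-elementary step.
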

\cref{cor1} follows for instance by a modification of \cite{boostingbook} [page 107-111] due to \cite{Bartlett2003RademacherAG} and using the stronger bound on the Rademacher Complexity of $O(\sqrt{d/m})$ due to \cite{Tightvcdudley} [See e.g. theorem 7.2 \cite{rademacherboundlecturenotes} ].
With \cref{cor1} in place we now  give the proof of \cref{lem15}.
\begin{proof}[Proof of \cref{lem15}] If $\p\left[R\right]m\leq d/\gamma^2 $ we have that 
$$\sqrt{\frac{d}{\gamma^2\p\left[R\right]m}}\geq 1,$$ 
and the claim holds since $\err(f_S)$ is always less than $1$.
Thus we assume from now on that $\p\left[R\right]m> d/\gamma^2 $. 
We now define the events $E_i:=\{|\{ S\cap R \}|=i\}$ for $m\ge i\geq \p\left[R\right]m /2$  and 
$E=\cup_{i\geq \p\left[R\right]m /2}E_i=\{ |S\cap R|\geq \p\left[R\right]m/2 \}$. We further define $X_j=1\{ S_j\in R \}$ for $j=1,\ldots,m$ and notice that these are i.i.d.\ $\{ 0,1 \}$-random variables where $X=\sum_{j} X_j$ has expectation $\p\left[R\right]m$ and the event $\sum_{j}X_j\geq\p\left[R\right]m/2$ is contained in $E$. Thus by a Chernoff bound and that $\exp(-x)\leq 1/x$ for $x> 0$, we get that
\begin{align*}
 \p_{S}\left[E\right]\geq 1-\exp\left(-\p\left[R\right]m/8\right)
 \geq 1-\frac{8}{\p\left[R\right]m}.\end{align*}
We thus have $\p_{S}\left[\bar{E}\right]\leq 8/\left(\p\left[R\right]m\right)$. Since we assumed that $\p\left[R\right]m\geq d/\gamma^2\geq 1$ and using $0\leq x\leq \sqrt{x}$ for $x\leq 1$, this further implies that 
\begin{align*}
\p_{S}\left[\bar{E}\right]\leq 8 \cdot \sqrt{\frac{d}{\gamma^2\p\left[R\right]m}}.
\end{align*}
We will soon show that $\E_{S}\left[\err(f_S)|E_i\right]\leq 16\cdot\sqrt{\tfrac{16C d}{\gamma^2\p\left[R\right]m}}$ for $m\geq i\geq \p\left[R\right]m/2$ for a universal constant $C\geq 1$. Now using these two relations combined with the law of total expectation on the partition $\bar{E}, E_{\p\left[R\right]m/2},\ldots, E_{m}$ and that $\err\leq 1$, we get that
\begin{align*}
\E_{S}\left[\err(f_S)\right]&= \sum_{i\geq \p\left[R\right]m/2}\E_{S}\left[\err(f_S)|E_i\right]\p_{S}\left[E_i\right]+\E_{S}\left[\err(f_S)|\bar{E}\right]\p_{S}\left[\bar{E}\right]\\
&\le 16\cdot\sqrt{\frac{16C d}{\gamma^2\p\left[R\right]m}}\p\left[E\right] +8\cdot\sqrt{\frac{d}{\gamma^2\p\left[R\right]m}}\\
&\leq 24\cdot\sqrt{\frac{16C d}{\gamma^2\p\left[R\right]m}}
,\end{align*}
as claimed in \cref{lem15} with the universal constant $a=24^{2}\cdot 16 C$. 
Thus we have to show that $\E_{S}\left[\err(f_S)|E_i\right]\leq 16\cdot\sqrt{\tfrac{16C d}{\gamma^2\p\left[R\right]m}}$ for $m\geq i\geq \p\left[R\right]m/2$. So consider such an $i$.
Since $\err(f_S)$ is a non-negative random variable, we have that 
\begin{align*}
\E_{S}\left[\err(f_S)\mid E_{i}\right]=\int_{0}^{\infty} \p_{S}\left[\err(f_S)\geq x\mid E_{i}\right] dx
\end{align*}
We will thus upper bound this integral. Now conditioned on $E_i$, we know that $S$ contains $i$ points that are samples according to $\dr$ and that $f_S$ on these examples has all margins at least $\gamma$. Thus we have by \cref{cor1} that with probability at least $1-\delta$ over $S$, it holds that
\begin{align}\label{lem15:eq1}
  \err(f_S) \leq \sqrt{\max\left\{\frac{8Cd}{\gamma^2 i},\frac{8\ln \left(2/\delta\right)}{i}\right\}}\end{align}
where $C\geq 1$ is a universal constant. 
For ease of notation let $r_i=\sqrt{\tfrac{8C d}{\gamma^2i}}$. We notice that $r_i\leq \sqrt{\tfrac{16C d}{\gamma^2\p\left[R\right]m}}$ since $i$ is assumed to be greater than $\p\left[R\right]m/2$. Using this, we get that
\begin{align}\label{lem15:eq3}
\int_{0}^{\infty} \p_{S}\left[\err(f_S)\geq x\mid E_{i}\right] dx
\leq   r_i\leq \sqrt{\tfrac{16C d}{\gamma^2\p\left[R\right]m}}+\int_{r_{i}}^{\infty} \p_{S}\left[\err(f_S)\geq x\right] dx
.\end{align}
Thus if we can show that
\begin{align*}
\int_{r_{i}}^{\infty} \p_{S}\left[\err(f_S)\geq x\right] dx \leq  15 \cdot \sqrt{\tfrac{ d}{\gamma^2\p\left[R\right]m}}
,\end{align*}
we get by combining this with \cref{lem15:eq3} that 
\begin{align*}
  \E_{S}\left[\err(f_S)\mid E_{i}\right] \leq 16 \cdot \sqrt{\tfrac{16C d}{\gamma^2\p\left[R\right]m}}
  \end{align*}
which would conclude the proof. 
Thus we now show $\int_{r_{i}}^{\infty} \p_{S}\left[\err(f_S)\geq x\right] dx\leq 15\cdot\sqrt{\frac{d}{\gamma^2\p\left[R\right]m}}$. For this, we do the following non-trivial rewriting of $x$ to make it resemble the second term in the max appearing in~\eqref{lem15:eq1} \begin{align*}
x=\sqrt{8\ln \left(\frac{2}{2\exp \left(\frac{-x^{2}i}{8}\right)}\right)i^{-1}}.
\end{align*}
Now for any $x\geq r_i$ we have that 
\begin{align*}
\p_{S}\left[\err(f_S)\geq x \mid E_{i}\right]=\p_{S}\left[\err(f_S)\geq \max(r_i,x) \mid E_{i}\right]
, \end{align*}
which combined with the rewriting of $x$ and \cref{lem15:eq1} with  $\delta=2\exp \left(\frac{-x^{2}i}{8}\right)$ and noticing $r_i$ is the first argument of the $\max$ in \cref{lem15:eq1}, we get that 
 \begin{align*}
\p_{S}\left[\err(f_S)\geq x\mid E_{i}\right]=\p_{S}\left[\err(f_S)\geq \max(r_i,x)\mid E_{i}\right]\leq 2\exp \left(\frac{-x^{2}i}{8}\right), \end{align*}
for any $x\geq r_i$.
Now using the density function of a normal distribution with mean $0$ and standard deviation $\sigma$ is equal to $\exp(-\frac{1}{2}(x/\sigma)^2)/(\sigma\sqrt{2\pi})$, and letting $N(0,\sigma)$ denote a normal random variable with mean $0$ and standard deviation $\sigma$, we get that
\begin{align*}
&\int_{r_{i}}^{\infty} \p_{S}\left[\err(f_S)\geq x\mid E_i\right] dx\leq \int_{r_{i}}^{\infty}2\exp \left(\frac{-x^{2}i}{8}\right) dx
\\
&\leq 2\sqrt{8/(2i)}\sqrt{2\pi}\int_{r_{i}}^{\infty} \frac{1}{\sqrt{8/(2i)}\sqrt{2\pi}}\exp \left(-\frac{1}{2}\left(\frac{x^{2}}{\sqrt{8/(2i)}}\right)^2\right) dx\\
&\leq 2\sqrt{8\pi /i} \cdot \p\left[N\left (0,\sqrt{ \frac{8}{2i}}\right )\geq r_i\right]
\leq 8\sqrt{\pi/(\p\left[R\right]m)}\leq 15\cdot\sqrt{\frac{d}{\gamma^2\p\left[R\right]m}}
,\end{align*}
where the second to last inequality follows from $i\geq \p\left[R\right]m/2$ and the last inequality by $\p\left[R\right]m\geq d/\gamma^2\geq 1$.
\end{proof}

\section{Experiments}
\label{sec:experiments}
In this section, we present the results of our pilot empirical study between the different sample optimal weak-to-strong learners.
We compare the algorithms on five different data sets. The first four are standard binary classification data sets and are the same data sets used in~\cite{marginsGradient}, whereas the last is a synthetic binary classification data set developed from the lower bound~\cite{adaboostNotOptimal} showing that AdaBoost is sub-optimal.
For all real world data sets, we have shuffled the samples and randomly set aside 20\% to use as test set. The weak learner we use for these is the scikit-learn \texttt{DecisionTreeClassifier} with \texttt{max\_depth=1}. This is default for the implementation of AdaBoost in scikit-learn, which is the implementation used in our experiments. We describe the data sets in greater detail below.
\begin{itemize}
    \item \textbf{Higgs}~\cite{higgs}:
    This data set represents measurements from particle detectors, and the labels tells whether they come from a process producing Higgs bosons or if they were a background process.
    The data set consists of 11 million labeled samples. However, we focus on the first 300,000 samples. Each sample consists of 28 features, where 7 of these are derived from the other 21.
    
    \item \textbf{Boone}~\cite{boone}:
    In this data set, we try to distinguish electron neutrinos from muon neutrinos.
    The data set consists of 130,065 labeled samples. Each sample consists of 50 features.

    \item \textbf{Forest Cover}~\cite{covertype}:
    In this data set, we try to determine the forest cover type of 30 x 30 meter cells. The data set actually has 7 different forest cover types, so we have removed all samples of the 5 most uncommon to make it into a binary classification problem. 
    This leaves us with 495,141 samples.
    Each sample consists of 54 features such as elevation, soil-type and more.
    
    \item \textbf{Diabetes}~\cite{diabetes}:
    In this data set, we try to determine whether a patient has diabetes or not from features such as BMI, insulin level, age and so on.
    This is the smallest real-world data set, consisting of only 768 samples. Each sample consists of 8 features.

    \item \textbf{Adversarial}~\cite{adaboostNotOptimal}: This data set, as well as the weak learner, have been developed using the lower bound instance in~\cite{adaboostNotOptimal}. Concretely, the data set consists of 1024 uniform random samples from the universe $\cX=\{1,\dots,350\}$. Every element of the input domain has the label $1$, but all weak-to-strong learners are run simply by giving them access to a weak learner. The weak learner is adversarially designed. When it is queried with a weighing of the training data set, it computes the set $T$ containing the first 20 points from the input domain that receive zero mass under the query weighing. It then searches through a set of hypotheses (chosen randomly) and returns the hypothesis with the worst performance on $T$, while respecting that it must have error no more than $1/2-\gamma$ under the query weighing and having at least $1/2+\gamma$ error on $T$. Finally, the hypothesis set contains an additional special hypothesis $h_0$ that is correct (returns 1) on all but the last 20 points of the input domain. This hypothesis is used to handle queries where none of the randomly chosen hypotheses have advantage $\gamma$. We refer the reader to~\cite{adaboostNotOptimal} for further details and intuition on why this construction is hard for AdaBoost.
\end{itemize}

The data sets represent both large and small training sets. For each data set, we run simple \textsc{AdaBoost} (accuracy shown as a blue horizontal line in the plots), \textsc{BaggedAdaBoost} (Algorithm~\ref{alg:bagged}), \textsc{LarsenRitzert}  (Algorithm~\ref{alg:LR}), and our new \textsc{Majority-of-X} (for X varying from 3 to 29). In our experiments, we vary the number of AdaBoosts trained by each weak-to-strong learner from $3$ to $29$, instead of merely following the theoretical suggestions. Each of these voting classifiers is then trained for 300 rounds on its respective input. This has been repeated 5 times with different random seeds, so the plots indicate the average accuracy across these 5 runs. For Algorithm~\ref{alg:LR} that creates $m^{\lg_4 3}$ sub-samples, we use the full set of sub-samples on the two small data sets Diabetes and Adversarial. For the three large data sets Higgs, Boone and Forest Cover, this creates a huge overhead in running time and we instead randomly sample without replacement from among the sub-samples resulting from the \textsc{SubSample} procedure (Algorithm~\ref{alg:subsample}). This is the reason for the non-constant behavior of this algorithm in the corresponding experiments. For \textsc{BaggedAdaBoost}, we have chosen to sample 95\% of the samples (with replacement) in our experiments. The results of the experiments on the three large data sets are shown in \cref{fig:big}.
\begin{figure}[h]
\begin{center}
\includegraphics[width=0.48\columnwidth]{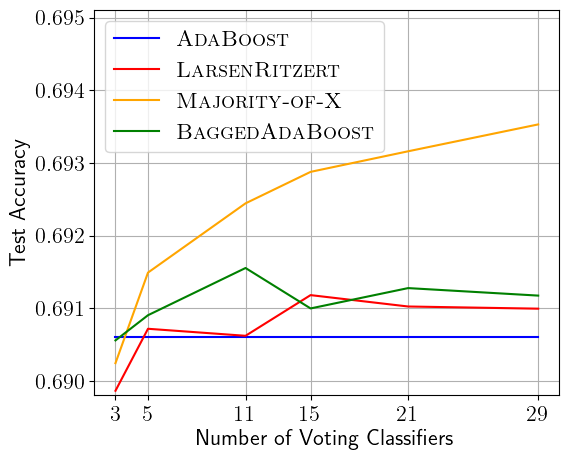}\\
\includegraphics[width=0.48\columnwidth]{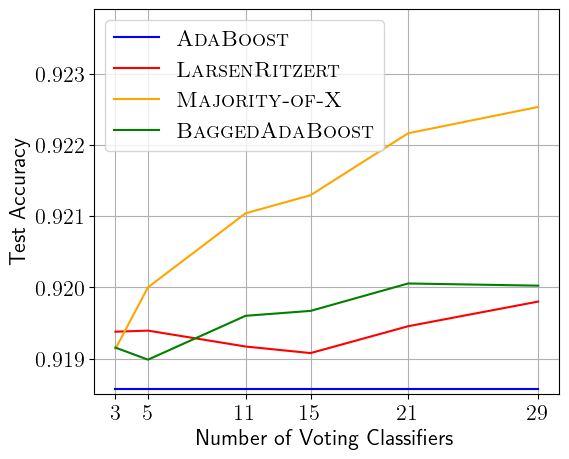}
\includegraphics[width=0.48\columnwidth]{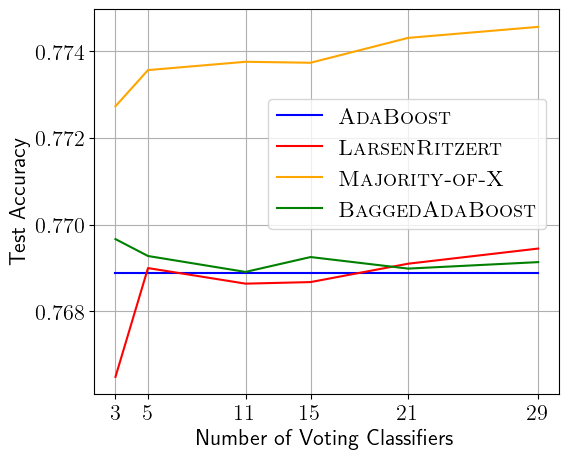}
\caption{Top is Higgs, Left plot is Boone. Right plot is Forest Cover}
\label{fig:big}
\end{center}
\vskip -0.2in
\end{figure}

The results in \cref{fig:big} gives an initial suggestion that our new algorithm with disjoint training sets might have an advantage on large data sets. Quite surprisingly, we see that the two other optimal weak-to-strong learners perform no better, or even worse, than standard AdaBoost. Experiments on the small Diabetes data set, as well as the Adversarially designed data set, are shown in \cref{fig:small}.

\begin{figure}[h]
\begin{center}
\includegraphics[width=0.48\columnwidth]{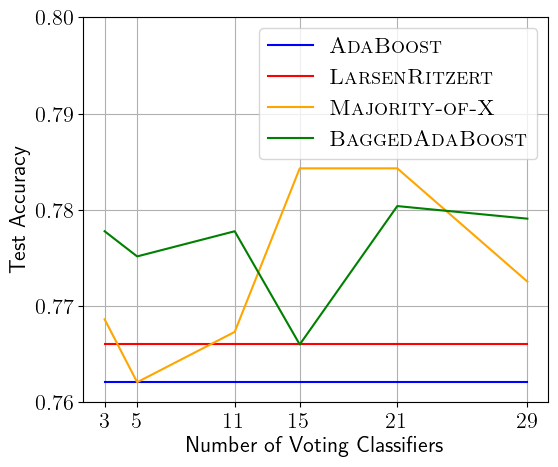}
\includegraphics[width=0.48\columnwidth]{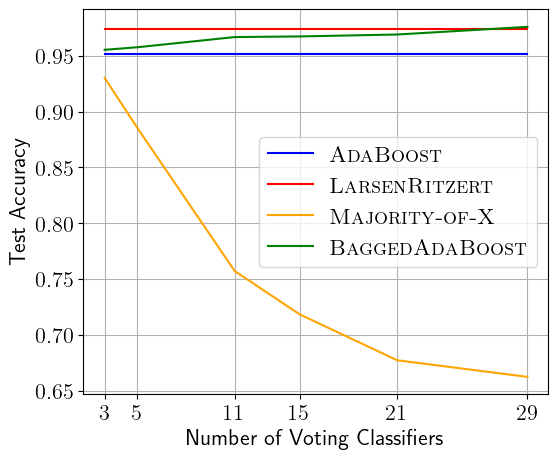}
\caption{Left plot is Diabetes. Right plot is Adversarial}
\label{fig:small}
\end{center}
\vskip -0.2in
\end{figure}

The results in \cref{fig:small} suggest that our new algorithm may perform poorly on small training sets. This makes sense, as the training data for each weak learner is extremely small on these data sets. Instead we find that the Bagging based variant outperforms classic AdaBoost. Since Bagging has a relative small overhead compared to simple AdaBoost, this suggests running both our new algorithm \textsc{Majority-of-X} and \textsc{BaggedAdaBoost} and using a validation set to pick the best classifier. We hope these first experiments may inspire future and more extensive empirical comparisons between the various weak to strong learners.

\clearpage

\section*{Acknowledgment}
This research is co-funded by the European Union (ERC, TUCLA, 101125203) and Independent Research Fund Denmark (DFF) Sapere Aude Research Leader Grant No. 9064-00068B. Views and opinions expressed are however those of the author(s) only and do not necessarily reflect those of the European Union or the European Research Council. Neither the European Union nor the granting authority can be held responsible for them.

\bibliography{Boostingmajorityof29refs}
\bibliographystyle{abbrv}

\end{document}